\newtheorem{theorem}{Theorem}
\title{\LARGE \bf
Online Multi-Target Tracking for Maneuvering Vehicles in Dynamic Road Context
}  
\author{Zehui Meng$^{1}$, Qi Heng Ho$^{1}$, Zefan Huang$^{1}$, Hongliang Guo$^{1}$, Marcelo H. Ang Jr$^{2}$ and Daniela Rus$^{3}$
\thanks{$^{1}$These authors are with Singapore-MIT Alliance for Research and Technology,
        Singapore
        {\tt\small \{zehui, qiheng, zefan, hongliang\}@smart.mit.edu}}%
\thanks{$^{2}$Marcelo H. Ang Jr is with Department of Mechanical Engineering, National University of Singapore, Singapore
        {\tt\small mpeangh@nus.edu.sg}}%
\thanks{$^{3}$Daniela Rus is with Computer Science \& Artificial Intelligence Laboratory, Massachusetts Institute of Technology, Cambridge, USA
        {\tt\small rus@mit.edu}}%
%
}
\begin{document}

\maketitle
\thispagestyle{empty}
\pagestyle{empty}
\setlength{\textfloatsep}{0pt plus 3pt minus 3pt}
\setlength{\belowcaptionskip}{0pt}
\setlength\floatsep{0.0\baselineskip plus 3pt minus 3pt}
\begin{abstract}
Target detection and tracking provides crucial information for motion planning and decision making in autonomous driving. This paper proposes an online multi-object tracking (MOT) framework with tracking-by-detection for maneuvering vehicles under motion uncertainty in dynamic road context. We employ a point cloud based vehicle detector to provide real-time 3D bounding boxes of detected vehicles and conduct the online bipartite optimization of the maneuver-orientated data association between the detections and the targets. Kalman Filter (KF) is adopted as the backbone for multi-object tracking. In order to entertain the maneuvering uncertainty, we leverage the interacting multiple model (IMM) approach to obtain the \textit{a-posterior} residual as the cost for each association hypothesis, which is calculated with the hybrid model posterior (after mode-switch). Road context is integrated to conduct adjustments of the time varying transition probability matrix (TPM) of the IMM to regulate the maneuvers according to road segments and traffic sign/signals, with which the data association is performed in a unified spatial-temporal fashion. Experiments show our framework is able to effectively track multiple vehicles with maneuvers subject to dynamic road context and localization drift. 

\end{abstract}

\section{Introduction}
In autonomous driving, knowing the motions of road-sharing traffic participants is critical to the decision making and motion planning of the ego-car. This frequently requires fast, reliable object detection and tracking modules that provide real-time pose estimations and updates.

The problem of single target tracking has been well addressed by filter-based methods \cite{KF,Isard1998,978374}. In presence of unknown number of targets and noisy sensor data, the more challenging multiple object tracking (MOT) task triggers the difficulty of data association induced by multiple hypotheses subjected to ambiguity. Combinations of the pairing-up are evaluated with designated metrics to sort out the optimal distribution/assignment. Approaches like Joint Probabilistic Data Association (JPDA) \cite{JPDA}, Multiple Hypothesis Tracking (MHT) \cite{MHT}, and Finite Set Statistics (FISST) \cite{8289374} are proven effective to solving the combinatorial assignment. As the problems size grows, Sequential Importance Resampling (SIR) based Filtering methods \cite{Isard1998,978374,MCMC} become more preferred for efficient hypotheses selection and propagation. Recently, as computer vision techniques mature, more and more works are tackling the problem via tracking-by-detection \cite{tbd,tbd1,tbd2,5459278,5674059}, which couples the detection with tracking to formulate the discrete optimization problem of data association. For the sake of handling maneuvering objects (e.g. vehicle turning, stopping, overtaking, etc.), the Interacting Multiple Model (IMM) \cite{IMM} paradigm demonstrates superior ability to identify the mode switch. 

In this paper, we focus on the MOT problem of vehicle tracking under maneuvering uncertainty and address the issue using an online approach. We propose an MOT framework for online tracking using tracking-by-detection while adopting a model posterior based data association that entertains the maneuver uncertainties regulated by road context. Our major contributions are summarized as follow: \textbf{(1).} We provide an online tracking-by-detection pipeline for MOT tracking under uncertainty for maneuvering vehicles. \textbf{(2).} We propose a deterministic data association metric for maneuvering targets - exploiting the \textit{a-posterior} residual calculated by the model posterior using interacting multiple model (IMM). \textbf{(3).} We solve the spatial and temporal data association in a unified fashion via integrating the road context as time-varying transition probability matrix (TPM). \textbf{(4).} We demonstrate experimentally the superiority of our framework against baselines methods tracking multiple maneuvering vehicles regulated by dynamic road context, with stable tracking achieved even under abrupt localization drift. 

The rest of the paper is organized as follow. \textbf{Section II} briefs the background and related works. \textbf{Section III} presents our framework. \textbf{Section IV} demonstrates the experimental results and analysis. \textbf{Section V} gives the conclusion.      

\begin{figure*}[t!]
\begin{minipage}{0.65\linewidth}
\centering
    \includegraphics[width=\linewidth,height=0.6\linewidth]{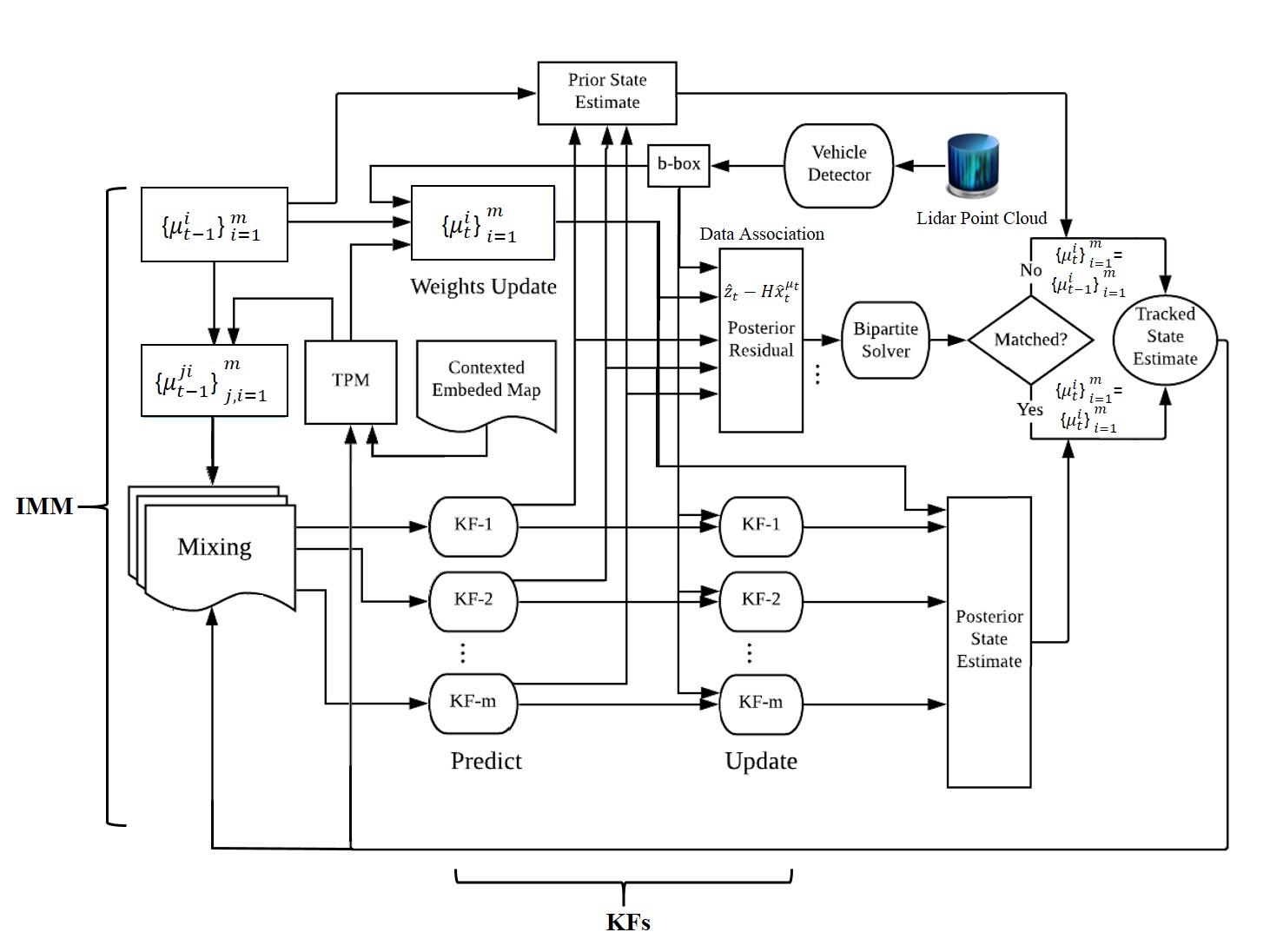}
    \vspace{-1em}
    \label{fig:overview}
\end{minipage}
\begin{minipage}{0.35\linewidth}
\centering
    \includegraphics[width=.9\linewidth,height=\linewidth]{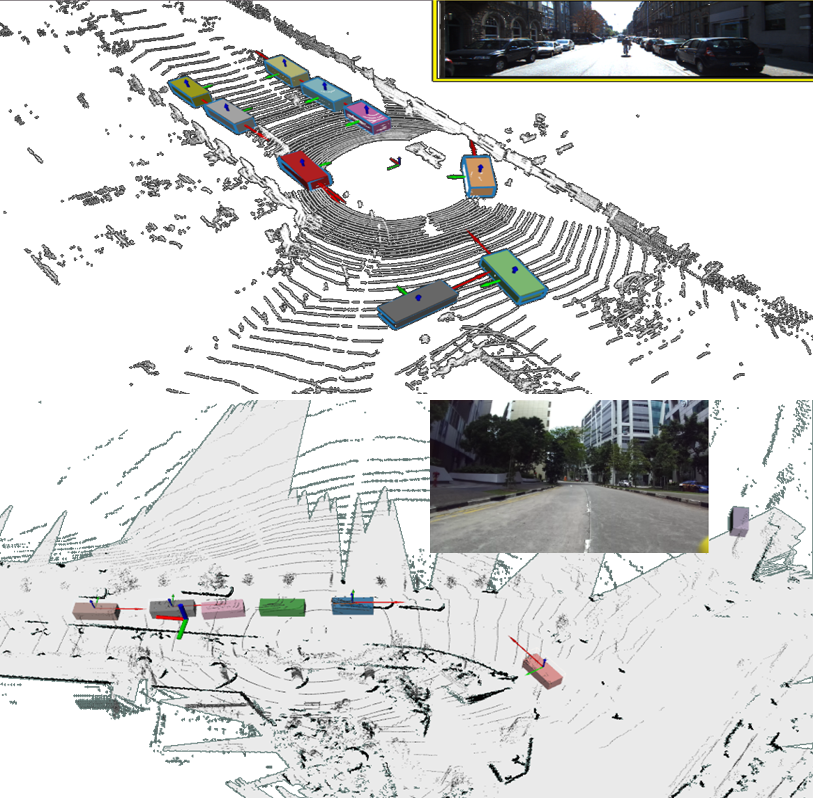}
    \vspace{-1em}
    \label{fig:instance}
\end{minipage}
\caption{Our detection and tracking framework overview and instances in KITTI dataset (top right) and Singapore roads (bottom right).}
\label{fig:overview}
\vspace{-1em}
\end{figure*}

\section{Background \& Related Works}

\subsection{Multi-target tracking}
Single target tracking (SOT) can be well addressed by Bayesian filters such as Kalman filter \cite{KF} and particle filter \cite{Isard1998,978374} as well as their variants; while multi-object tracking (MOT) brings the problem of linking the measurements to the targets,
known as data association. Approaches to MOT can be categorized by how they handle the data association.

\subsubsection{Deterministic vs Probabilistic}

Data association can be handled either deterministically or probabilistically. 

Deterministic association commonly assigns each measurement a particular target or claim mismatch. Classical works employ affinity evaluation like Nearest Neighbor \cite{NN} or color histograms \cite{MAP}, while modern works are more interested in learned features in conjunction with motion models \cite{DBLP:journals/corr/Choi15,6751397,6909555,5540148,10.1007/978-3-540-88690-7_48,10.1007/978-3-642-37431-9_8}. Towards optimal assignment, Multiple Hypothesis Tracking (MHT) \cite{MHT} offers a breadth-first-search style optimization by maintaining a tree of all possible association hypotheses and propagating the track likelihoods. Tree pruning and k-best assignment techniques \cite{481539} can be employed to alleviate the combinatorial explosion. 

Alternatively, probabilistic association calculates a probabilistic distribution of all measurements over the tracks. The probabilistic version of MHT (PMHT) \cite{PMHT} adopts a maximum-likelihood estimation (derived from expectation-maximization) of the assignments to propagate the hypotheses. Joint Probabilistic Data Association filter (JPDAF)~\cite{JPDA} offers Bayesian approximation of the Gaussian posteriors of all possible assignments to each track, weighted by their respective hypothesis likelihood \cite{Isard1998}. For large search spaces, Sequential Importance Resampling (SIR) based data association like the Markov Chain Monte Carlo (MCMC) \cite{MCMC,1512059} and the Rao-Blackwellized Monte Carlo (RBMC) 
\cite{Rao,Rao2} provide more efficient sampling of the hypotheses with asymptotic convergence guarantee.

\subsubsection{Batch Optimization vs Online Association} 
Data association can be processed either online or in batch.

Batch approaches consider the optimization of the entire sequence as a global optimal solution, which often model the problem as a directed graph with binary constraints and solve for the min-cost network flow \cite{MAP,5995604,6520846,Li09learningto,5708151,5540148}. On the other hand, online approaches deal with the tracking by formulate the matching between consecutive frames into a series of bipartite linear assignment problem \cite{10.1007/978-3-540-88690-7_48,10.1007/978-3-642-37431-9_8,6909555,5674059,MCMC,1512059}. The data association is usually optimized greedily in each (local) sub-component of the whole sequence to achieve real-time efficiency. 

\subsubsection{Maneuvering Target Tracking}
For dealing with maneuvering targets, the Interacting Multiple Model (IMM) filters \cite{IMM} maintains multiple weighted models which are processed in parallel through Bayesian filtering to form a hybrid filter. The models are re-weighted with new measurements iteratively, with which the overall estimate is calculated as a Gaussian mixture. The IMM is proven to be the most cost-effective state estimator of highly maneuverable objects, and frequently used in conjunction with JPDA \cite{Kampker2018TowardsMD}. 

\subsection{Tracking-by-Detection Pros and Cons} 
Tracking-by-detection is nowadays a favored paradigm for coupled detection and tracking \cite{tbd,tbd1,tbd2,5459278,5674059}. Differring from generative filtering approaches trying to estimate continuous trajectories with observations, it converts the problem to a discrete linear assignment. Usually a target detector is selected to perform frame-by-frame detections (marking out target locations and orientations) which are correlated to form individual tracklets of the targets, as per data-association metric employed \cite{6520846,DBLP:journals/corr/Choi15,7780524,DBLP:journals/corr/WangF16b,7410891,5708151,8460951}. These approaches have in-principle simplified the hypotheses sampling and filtering process with the use of detection cues (e.g., appearance features, etc.), saving computation effort from the combinatorial complexity of raw measurements. 

However, many of these works are ad-hoc to specific measurement source or sensory modalities, especially those leveraging learned features as affinity functions \cite{5540148,6909555,10.1007/978-3-642-37431-9_8,Li09learningto, 10.1007/978-3-540-88690-7_48}, ending up with particular data association regimes that may not be universally applicable across platforms or detectors. In absence of such discriminative cues, highly maneuvering objects can become problematic to handle. Relying on the deterministic structural constraints (e.g. bounding box IoUs) and fixed model prior based prediction-measurement association metric, the maneuver causing mode-switch is hardly entertained and likely leads to ambiguous association. 

With the above said, we consider the effect of model shift for discriminating the affinity of deterministic online detection-target associations, which could potentially benefit the real-time tracking under maneuver uncertainty.

\section{Methodology}
In this section, we present our proposed MOT framework which extends general tracking-by-detection paradigm's capability to entertain maneuvering targets and incorporate the road context as constraints to regulate the data association. An overview of the framework is shown in~\textbf{Fig.\ref{fig:overview}}.

\subsection{Target Detector}
We target the point cloud from 3D Lidar as our main sensor measurement. Unlike the camera images, the point cloud is sparser and less informative for discriminative appearance features to take effect. Instead of counting on the appearance affinity, we utilize only the general geometric detection results for data association. We employ a CNN-based 3D object detector developed on top of the Sparsely Embedded Convolutional Detection (SECOND \cite{second}). The network takes in raw 3D point cloud ($360^o$) and outputs the detection results in terms of objectness ($s$) and 3D bounding boxes (dimension $(l,w,h)$, position $(x,y,z)$, and yaw angle $\theta$). We implement auxiliary variational encoders on top of the voxel-feature extraction layers, which serves to provide generative voxel features for augmenting sparse point cloud \cite{SPPAM}. The model is trained on the Kitti Dataset \cite{KITTI} and achieves a moderate performance with 20 Hz frame rate.   
\subsection{Kalman Filter based MOT Tracker}
Kalman Filter is employed as the backbone for our MOT tracker which is in nature a collection of multiple SOT KF-trackers maintained with online data association and a lifespan control scheme. For each frame, assume we have varying numbers of detections and tracks for linear assignment.   

\subsubsection{KF Model}
For state estimation, typical Kalman filter assumes linear motion ($F$, control input omitted) and sensor ($H$) models with Gaussian noises ($Q,R$):
\small
\begin{align} 
    & x_t = Fx_{t-1} + \omega, &\omega\sim\mathcal{N}(0,Q) \\
    & z_t = Hx_t+ \nu, &\nu\sim\mathcal{N}(0,R)
\end{align}
\normalsize
where $x_t$ and $z_t$ are the state and measurement at frame $t$. 
In our context, the state $x_t = (l,w,h,x,y,z,\theta,\dot{x},\dot{y},\dot{z},\dot{\theta})$, the measurement $z_t = (l,w,h,x,y,z,\theta)$.

\subsubsection{Track Lifespan Management} 
The lifespan management controls the registration, propagation, and termination of individual SOT KF-trackers. A dedicated KF-tracklet is initialized in case of each unmatched detection (detection without a match in the tracks), allowing it to be propagated forward. To alleviate the influence of missing detection and false positives, the tracklet is not registered to become valid tracks until consecutive matching with the next-few-frame detections is secured (i.e. lifespan must be greater than a few number of frames to survive). Likewise, unmatched track (track without any matching detections) is not immediately terminated but still allowed to carry on until constant mismatch is seen throughout the next few frames. The lifespan scheme enables filtering of false or missing detections.    

\subsubsection{Prediction}
For newly transferred tracklets and existing tracks, the states are propagated with the KF prediction step:
\small
\begin{flalign} 
    & \hat{x_t} = F\tilde{x_t}_{-1} \label{KF1}\\
    & \hat{P_t} = F\tilde{P_t}_{-1}F^T+ Q 
\end{flalign}
\normalsize
where $\hat{x_t}, \hat{P_t}$ are the predicted state mean and covariance.

\subsubsection{Update}
Only for matched detection-target pairs, the corresponding track state is updated with the KF update step:
\small
\begin{flalign} 
    & K_t = \hat{P_t}H^T(H\hat{P_t}H^T+R)^{-1} \\
    & \tilde{x_t} = \hat{x_t} + K_t(\hat{z_t}-H\hat{x_t}) \\
    & \tilde{P_t} = (I-K_tH)\hat{P_t} 
\end{flalign}
\normalsize
where $K_t$ is the optimal Kalman gain and $\hat{z_t}$ is the measurement observed at frame $t$.

\subsection{Data Association}
The data association in our proposed framework considers the potential maneuvering behavior of the targets, adopting the \textit{a-posterior} residual calculated using the model posterior.

\subsubsection{Error Minimization}
In the common practice of IoU or other affinity score based data association, the deterministic mean values of the predicted state $\hat{x_t}$ and the observed measurement $\hat{z_t}$ are adopted to equivalently minimize the difference between $\hat{z_t}$ and $H\hat{x_t}$, which are independent of $P,Q,R$. Intuitively, this seems to only minimize the difference between the observed value and the expected value, known as the residual (a.k.a. innovation). Yet we show in the following part it is equivalent to minimizing the {\it a-priori} and {\it a-posteriori} distribution error given specific models:
\begin{theorem}\label{theo:Theorem1}
Given the KF model with known $F,H,Q,R$, minimizing $(\hat{z_t}-H\hat{x_t})$ is equivalent to minimizing the KL-divergence of {\it a-priori} $p(x_t|z_{1:t-1})\sim\mathcal{N}(x_t;\hat{x_t},\hat{P_t})$ and {\it a-posteriori} $p(x_t|z_{1:t})\sim\mathcal{N}(x_t;\tilde{x_t}, \tilde{P_t})$.
\end{theorem}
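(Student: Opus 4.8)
The plan is to exploit the standard structural fact that the Kalman covariance recursions are measurement-free. For the track under consideration at frame $t$, the predicted mean and covariance $(\hat{x_t},\hat{P_t})$ are given, and the gain $K_t=\hat{P_t}H^T S_t^{-1}$, the innovation covariance $S_t := H\hat{P_t}H^T+R$, and the updated covariance $\tilde{P_t}=(I-K_tH)\hat{P_t}$ are all determined by $\hat{P_t}$ together with the fixed model matrices $F,H,Q,R$; in particular none of them depends on which candidate detection $\hat{z_t}$ is assigned to the track. The only quantity that varies with the candidate $\hat{z_t}$ is the posterior mean $\tilde{x_t}=\hat{x_t}+K_t(\hat{z_t}-H\hat{x_t})$. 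I will take the standing assumption $\hat{P_t}\succ 0$ (hence $\tilde{P_t}\succ 0$) so that both Gaussians are nondegenerate and the KL divergence is finite.

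First I would write out the closed form of the KL divergence between the two $k$-dimensional Gaussians $p(x_t|z_{1:t})\sim\mathcal{N}(\tilde{x_t},\tilde{P_t})$ and $p(x_t|z_{1:t-1})\sim\mathcal{N}(\hat{x_t},\hat{P_t})$, namely
\[
\mathrm{KL} \;=\; \tfrac12\Big[\,\mathrm{tr}\big(\hat{P_t}^{-1}\tilde{P_t}\big) - k + \ln\tfrac{|\hat{P_t}|}{|\tilde{P_t}|} + (\tilde{x_t}-\hat{x_t})^{T}\hat{P_t}^{-1}(\tilde{x_t}-\hat{x_t})\,\Big],
\]
and then observe that, by the previous paragraph, the trace term, the dimension term $k$, and the log-determinant term are all constant with respect to the association choice; only the Mahalanobis term changes with $\hat{z_t}$. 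Substituting the update equation gives $\tilde{x_t}-\hat{x_t}=K_t r_t$ with $r_t := \hat{z_t}-H\hat{x_t}$ the residual (innovation), so the variable part of $\mathrm{KL}$ equals $\tfrac12\, r_t^{T} K_t^{T}\hat{P_t}^{-1}K_t\, r_t$. Using $K_t=\hat{P_t}H^T S_t^{-1}$ and symmetry of $\hat{P_t},S_t$ this collapses to $\tfrac12\, r_t^{T} M r_t$ with $M := S_t^{-1}\big(H\hat{P_t}H^T\big)S_t^{-1}$ a fixed positive semidefinite matrix (positive definite when $\hat{P_t}\succ 0$ and $H$ has full row rank). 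Hence $\mathrm{KL}= c + \tfrac12\|r_t\|_M^2$ for a constant $c$ and fixed $M$ independent of $\hat{z_t}$: the divergence is a monotonically increasing function of the weighted residual norm $\|r_t\|_M$ and attains its minimum exactly at $r_t=0$, i.e. at $\hat{z_t}=H\hat{x_t}$. Therefore the association minimizing the residual $\hat{z_t}-H\hat{x_t}$ is precisely the one minimizing the prior-to-posterior KL divergence, which is the claimed equivalence. I would note that the argument with the roles of prior and posterior swapped goes through verbatim, the reversed KL being again a fixed positive quadratic form in $r_t$ plus a constant, now with weight $K_t^{T}\tilde{P_t}^{-1}K_t$.

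The step requiring the most care is pinning down the precise sense of ``equivalent.'' The clean statement produced by the computation above is that $\mathrm{KL}$ is an affine-plus-quadratic function of $r_t$ with unique minimizer $r_t=0$, so ranking candidate detections by $\mathrm{KL}$ coincides with ranking them by the $M$-weighted (Mahalanobis-type) residual norm; if one insists on the plain Euclidean residual $\|\hat{z_t}-H\hat{x_t}\|$ rather than the $M$-weighted one, the correspondence is only that both are minimized by the same association and both drive $r_t\to 0$, not a strict monotone bijection. Everything else is routine: the Gaussian-to-Gaussian KL formula, the substitution $\tilde{x_t}-\hat{x_t}=K_t r_t$, and the identity $K_t^{T}\hat{P_t}^{-1}K_t = S_t^{-1}(H\hat{P_t}H^T)S_t^{-1}$, which follows directly from $K_t=\hat{P_t}H^T S_t^{-1}$.
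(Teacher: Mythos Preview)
Your argument is correct and follows essentially the same route as the paper: write out the closed-form Gaussian KL, observe that all covariance-dependent terms are fixed by $F,H,Q,R$ independently of the candidate $\hat{z_t}$, and reduce the only variable part to a quadratic in the innovation $r_t=\hat{z_t}-H\hat{x_t}$ via $\tilde{x_t}-\hat{x_t}=K_t r_t$. The paper computes $D_{KL}(\text{prior}\,\|\,\text{posterior})$ rather than your $D_{KL}(\text{posterior}\,\|\,\text{prior})$ and simply concludes equivalence with $\|r_t\|_2$; your explicit identification of the weight matrix $M=S_t^{-1}(H\hat{P_t}H^T)S_t^{-1}$ and the caveat distinguishing the $M$-weighted norm from the plain Euclidean one is a genuine sharpening of what ``equivalent'' means here.
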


\begin{proof}

According to Bayes' Rule, the resultant state estimate distribution given the current observation at frame $t$ is:
\small
\begin{flalign} 
    & p(x_t|z_{1:t}) = \frac{p(x_t|z_{1:t-1})p(z_t|x_t)}{p(z_t|z_{1:t-1})}
\end{flalign}
\normalsize
where $p(x_t|z_{1:t-1})$, $p(x_t|z_{1:t})$, $p(z_t|x_t)$ correspond to $\mathcal{N}(x_t;\hat{x_t},\hat{P_t})$, $\mathcal{N}(x_t;\tilde{x_t}, \tilde{P_t})$, $\mathcal{N}(z_t;Hx_t, R)$, respectively, and $p(z_t|z_{1:t-1})$ the normalizing factor.

We aim to minimize the distribution error between the {\it a-priori}   $p(x_t|z_{1:t-1})\sim\mathcal{N}(\hat{x_t},\hat{P_t})$ and the {\it a-posteriori} ~$p(x_t|z_{1:t})\sim\mathcal{N}(\tilde{x_t}, \tilde{P_t})$, both in $\mathbb{R}^n$. A straightforward evaluation metric is the KL-Divergence:
\small
\begin{flalign} 
      D_{KL} & (p(x_t|z_{1:t-1})||p(x_t|z_{1:t})) \nonumber\\
      = & E_{p(x_t|z_{1:t-1})} \Big[\log p(x_t|z_{1:t-1}) - \log p(x_t|z_{1:t})\Big] \nonumber\\
      = & \frac{1}{2}\Big(\log\frac{\det{\tilde{P_t}}}{\det{\hat{P_t}}}-n+ tr(\tilde{P_t}^{-1}\hat{P_t}) \nonumber\\
          &  + (\tilde{x_t}-\hat{x_t})^T \tilde{P_t}^{-1}(\tilde{x_t}-\hat{x_t}) \Big)
\end{flalign}
\normalsize
where $\tilde{x_t}-\hat{x_t}$ is exactly $K_t(\hat{z_t}-H\hat{x_t})$ as in \textbf{Eq.(6)}. 

Recall that for a given model, from \textbf{Eq.(3) - Eq.(7)} we can see that $K_t,\hat{P_t},\tilde{P_t}$ are independent of the measurement mean value $\hat{z_t}$ but $F,H,Q,R$ only, thus can be calculated offline as constants. Thus minimizing the $D_{KL}$ is equivalent to minimizing the term $||\hat{z_t}-H\hat{x_t}||_2$. 
\end{proof}

\subsubsection{Handling the Maneuvers}
The state transitions can change patterns due to interruptions and maneuvers. To account for that, the linear model of Kalman filter generally takes an additional control input term $Bu$, with which the model description in \textbf{Eq.(1)} can be revised as:
\small
\begin{align} 
    & x_t = Fx_{t-1} + Bu_t +\omega, & \omega\sim\mathcal{N}(0,Q) 
\end{align}
\normalsize
where $u_t$ is the input variable, $B$ is the transformation matrix.

The control $u$ might be something like acceleration, in which case a constant acceleration model $B$ can be assumed. If linear motions are assumed throughout the tracking, the above can be approximated with a new motion model $G$:
\small
\begin{flalign} 
    & \hat{x_t}^G = G\tilde{x_t}_{-1} \\
    & \hat{P_t}^G = G\tilde{P_t}_{-1}G^T + Q 
\end{flalign}
\normalsize
with which the $K_t^G, \tilde{x_t}^G, \tilde{P_t}^G$ can be revised accordingly.

In circumstances of dynamically maneuvering models, it is no longer optimal to minimize the term $||\hat{z_t}-H\hat{x_t}||_2$ with previous motion model $F$ for the best match, but to use $G$ instead to get the minimum $||\hat{z_t}-H\hat{x_t}^G||_2$. 

On the other hand, in many situations one can expect the targets to operate with ruled maneuvers, such as the vehicles on road following the traffic rules. Such context information can be adopted as constraints to regulate the maneuvers. Intuitively, the road context provides instructions to limit the vehicle motion models shift (e.g. the car can hardly move sideways in narrow straight lanes but can turn abruptly at intersections and roundabouts). Under dynamic road context, the motion model ought to be frequently adjusted to keep track of the targets. 

\subsubsection{Exploiting the Model Shift}
Regarding the above issues of maneuvering target with dynamic road context, a possible approach is to impose the road context constraints on the motion matrices $G$, characterized by spatial and temporal relevance at different road segments. The linear assignment become the optimization of the bipartite graph: 
\small
\begin{flalign} 
     & \underset{}{\textbf{minimize}} ~~ \sum_p^M \sum_q^N ||\hat{z}_t^p-(H{\hat{x}_t^q})^G||_2 \nonumber\\
     & \textbf{subject to} ~~ \textit{context-aware constraints}
\end{flalign}
\normalsize
where $M,N$ are the number of detection and tracks. 

However, doing so requires plenty effort imposing carefully hand-crafted correlations of each best fitting $G$ matrix exactly complying with the constraints, and sometimes impossible due to the non-linearity in dynamics (e.g. deriving Jacobians for EKF). A quick fix is to use primitive linear models to approximate $G$. We can still employ common motion models (e.g. constant velocity/acceleration) as primitives to estimate the resultant motion. But when and how the models switch to one another shall be heuristically encoded following the road context. This naturally leads to the interacting multiple model (IMM) filter.

The interacting multiple model (IMM) filter is a rescuer solution to tracking such mode-switching targets by mixing multiple models as a time-varying hybrid model to estimate the states. The IMM approximates the posterior state estimate as a Gaussian mixture of that given by multiple filters, which are weighted by the posterior mode probabilities $\{\mu_t^i\}_{i=1}^m$:
\small
\begin{flalign}
    \mu_t^i & = p(F_t=i|z_{1:t}) \nonumber\\
    & = \frac{p(z_t|F_t=i,z_{1:t-1})p(F_t=i|z_{1:t-1})}{p(z_t|z_{1:t-1})}\nonumber\\
    & = \frac{\mathcal{N}(z_t;H\hat{x_t}^i,H\hat{P_t}^i H^T+R)\sum_{j=1}^m Pr_{ji}\mu_{t-1}^j}
    {\sum_{k=1}^m \mathcal{N}(z_t;H\hat{x_t}^k,H\hat{P_t}^k H^T+R)\sum_{j=1}^m Pr_{jk}\mu_{t-1}^j}
\end{flalign}
\normalsize
where $i=1,\ldots,m$ stands for the $i^{th}$ model of the $m$ primitive motion models, and $Pr_{ji}=p(F_t=i|F_{t-1}=j)$ is the (time-invariant) mode transition probability from the $j^{th}$ to the $i^{th}$, at frame $t$.

By calculating the mixing probabilities $\{\mu_{t-1}^{ji}\}_{j,i=1}^m$ for the corresponding KF filter of the $i^{th}$ model, the individual model-specific prediction and update steps can be calculated based on the mixed state estimates $\{\bar{x}_{t-1}^i, \bar{P}_{t-1}^i\}_{i=1}^m$ :
\small
\begin{flalign}
    \mu_{t-1}^{ji} & = p(F_{t-1}=j|F_t=i, z_{1:t-1}) \nonumber\\
    & = \frac{p(F_t=i|F_{t-1}=j, z_{1:t-1})p(F_{t-1}=j|z_{1:t-1})}
    {p(F_t=i|z_{1:t-1})}\nonumber\\
    & = \frac{Pr_{ji}\mu_{t-1}^j}{\sum_{k=1}^m Pr_{ki}\mu_{t-1}^k}
\end{flalign}
\begin{flalign}
   & \bar{x}_{t-1}^i = \sum_{j=1}^m \mu_{t-1}^{ji} \tilde{x_t}_{-1}^j \\
   & \bar{P}_{t-1}^i = \sum_{j=1}^m \mu_{t-1}^{ji} \Big(\tilde{P_t}_{-1}^j + (\tilde{x_t}_{-1}^j - \bar{x}_{t-1}^i) (\tilde{x_t}_{-1}^j - \bar{x}_{t-1}^i)^T \Big)\\
   & \tilde{x_t}^i, \tilde{P_t}^i = KF(\bar{x}_{t-1}^i, \bar{P}_{t-1}^i)
\end{flalign}
\normalsize
where $KF$ follows the Kalman filter model in $\textbf{Eq.(3)-(7)}$.

The resultant overall state estimate $\bar{x_t}, \bar{P_t}$ can them be calculated as standard Gaussian mixture:
\small
\begin{flalign}
   & \bar{x_t} = \sum_{j=1}^m \mu_{t}^{i} \tilde{x_t}^i \\
   & \bar{P_t} = \sum_{j=1}^m \mu_{t}^{i} \Big(\tilde{P_t}^i + (\tilde{x_t}^i - \bar{x_t}) (\tilde{x_t}^i - \bar{x_t})^T \Big)
\end{flalign}
\normalsize

We employ the IMM as the potential model change ``indicator" to evaluate the data association error. As the IMM updates the model weights after each filtering iteration, the underlying hybrid model characterizing the state transition to associate with a certain measurement $\hat{z_t}$ can be recognized as a mixture of models with posterior weights $\mu_t=\{\mu_t^i\}_{i=1}^m$. In such case, the corresponding posterior weights (mode probabilities) can be utilized to yield a heuristic of $G$ to evaluate the bipartite  detection-track association: 
\small
\begin{flalign} 
     & \underset{}{\textbf{minimize}} ~~ \sum_p^M \sum_q^N ||\hat{z}_t^p-(H{\hat{x}_t^q})^{\mu_t}||_2 \nonumber\\
     & \textbf{subject to} ~~ \textit{context-aware constraints} 
\end{flalign}
\normalsize
where each $\hat{x_t}^{\mu_t}$ is calculated with the posterior hybrid model weights $\mu_t$ (after mode probability update): 
\small
\begin{flalign} 
    & \hat{x_t}^{\mu_t} = \sum_{j=1}^m \mu_{t}^{i} F_t^i \tilde{x}_{t-1}^i
\end{flalign}
\normalsize

\subsubsection{Incorporating the Road Context}

\begin{figure}[t!]
\centering
  {\includegraphics[width=.95\linewidth,height=0.5\linewidth]{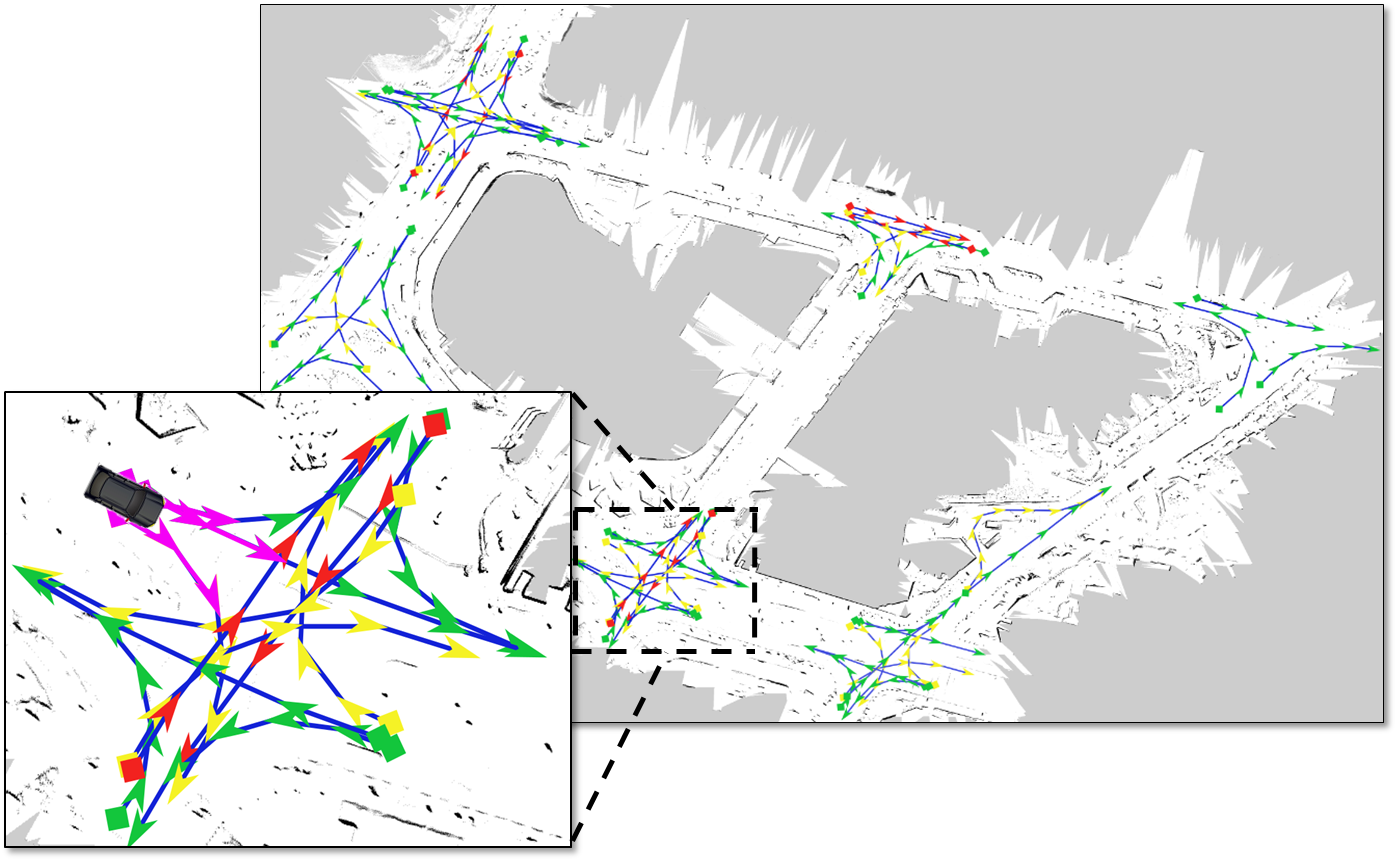}}
\caption{The road context vectors embedded in digital map. The vectors colored in blue describe basic geometric constraints following the road segments and traffic signs. The associated arrows colored in red, yellow, and green corresponds to constraints following the traffic light signal stop, slow-down/warn, and go, respectively. The vectors highlighted in magenta in zoomed view illustrates the instance of nearest neighbour context vectors activated by a vehicle.}
\label{fig:context}
\end{figure}

With such a setup, the road context can be integrated into the IMM to regulate the maneuvers. The interface is intuitively the transition probability matrix (TPM) $Pr$. We cast the transition probability $Pr_{ji}^t=p(F_t=i|F_{t-1}=j,z_{1:t})$ a time varying function characterized by the priors of the road contexts in the prescribed map - including traffic signs and different road segments. The current overall state estimate of each target $\bar{x_t}$ is transformed into the map frame, which is compared to the road context priors in the form of directed unit vectors $V_c$ with velocity toggle values $\tau \in \{0,0.5,1\}$ to indicate the status of traffic like stop, go, and slowing down situations (e.g. traffic lights can be integrated). A illustration is shown in \textbf{Fig.~\ref{fig:context}}, where the road context information is embedded into the map as directed vectors, each with a unique TPM. 

Each directed vector $V_c^i$ is prescribed with dedicated TPM $Pr^i$. The cross product of the target vehicle state and the nearby guiding context vectors activated (\textbf{Fig.~\ref{fig:context}} magenta arrows) by nearest-neighbour search  are obtained to calculate the heuristic likelihood that the vehicle is about to follow which context, the result of which is further regulated by the velocity toggles to comply with different road sections: 
\small
\begin{flalign} 
     & p_t^i = p(Pr^t = Pr^i|\bar{x_t},V_c^i) \triangleq \eta \cdot (H\bar{x_t} \times
     V_c^i) \cdot \tau
\end{flalign}
\normalsize
where $\eta$ is the normalizing factor.

Then the TPM $Pr^t$ at each target state $\bar{x_t}$ is calculated as the weighted sum of the TPMs associated with the nearby $k$ road context vectors:
\small
\begin{flalign} 
     & Pr^t = \sum_{i=1}^k p_t^i Pr^i 
\end{flalign}
\normalsize


\subsubsection{Solving the Bipartite Matching}
As mentioned, the tracker is propagated with online data association to achieve real-time performance. The bipartite problem is solved regarding adjacent frames. 
As a common practice, we first conduct a gating (with relatively relaxed spatial constraints) to filter out detections and tracks that are unlikely to have connections in their neighbourhood, which are straightaway taken care of by the lifespan manager.
For the remaining detections and tracks, the linear assignment is performed with the Hungarian Algorithm \cite{Kuhn2010TheHM}, employing our proposed association metric - the \textit{a-posterior} residual. The Hungarian Algorithm essentially carries out an combinatorial optimization of the bipartite graph composed of the detections and tracks. The resulting trackers are handled by the lifespan manager as described previously.

\section{Experiments}

\begin{figure*}[t!]
\centering
  {\includegraphics[width=0.95\linewidth,height=0.35\linewidth]{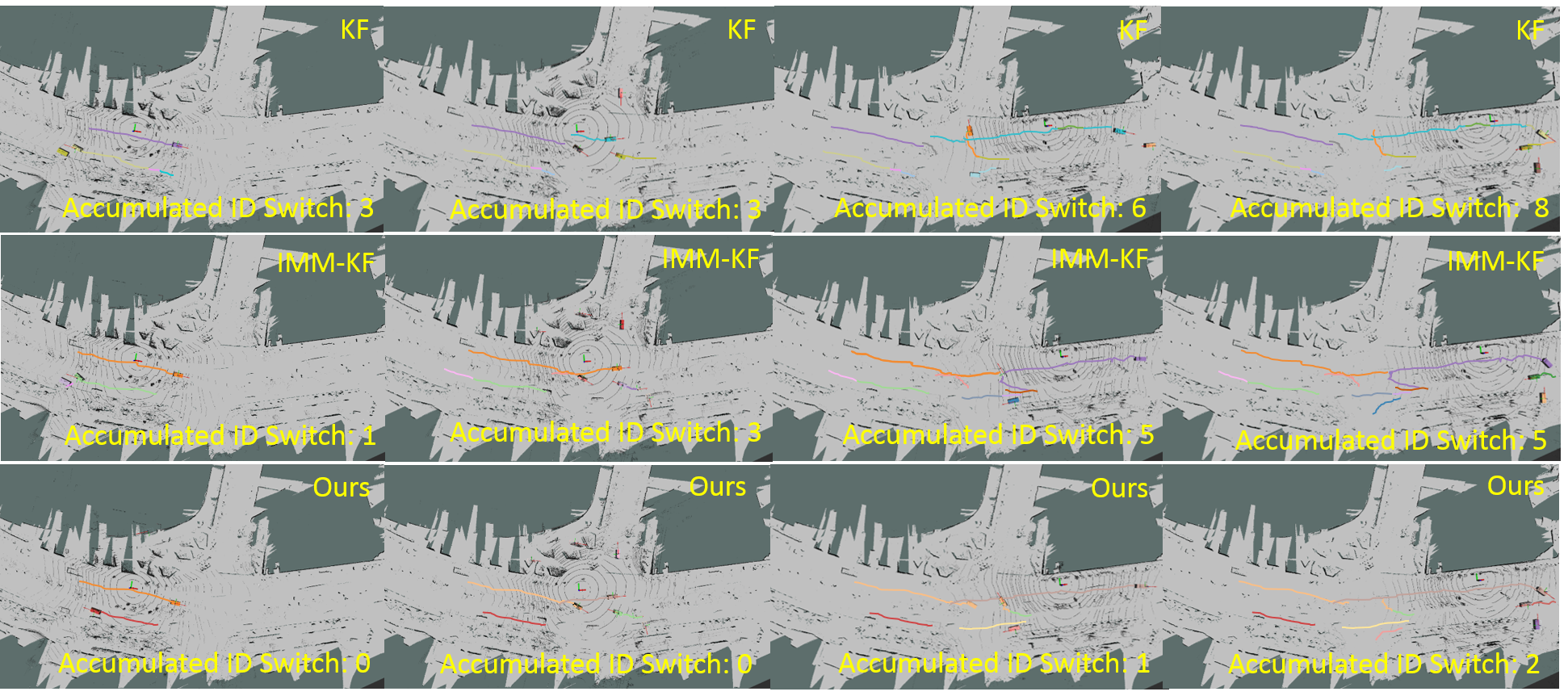}}
\caption{The comparison of tracking result in Singapore roads using different approaches. Bounding boxes and trajectories color coded by individual target IDs.}
\vspace{-1em}
\label{fig:compare}
\end{figure*}

In this section, we present the experimental results and analysis of multi-vehicle tracking using our proposed framework with Lidar only. We conduct two sets of tests to verify quantitatively and qualitatively the performance of road context -agnostic and -aware setups, respectively.  

\subsection{Road-context-agnostic Tracking on KITTI Dataset}
The first test is performed on the KITTI tracking dataset \cite{KITTI}. Instead of benchmarking with the best-ranking tracking-by-detection algorithms, the purpose/focus is to leverage the labelled groundtruths to validate the the effect of our proposed IMM posterior residual based association on MOT tracking, without using the road context. All the 21 episodes in the training set are used for evaluating the tracking performance on the \textit{Car} category and the metrics are calculated with 3D IoU evaluation proposed in \cite{Weng2019_3dmot}. Our 3D detector processes the $360^o$ point cloud of each frame and produces the bounding boxes for the cars detected, which are tracked later on using the KF-based MOT trackers. An instance is shown in \textbf{Fig.~\ref{fig:overview}}, top right.

The plain version of the classical KF-based tracking pipeline and the IMM version of it are selected as the baseline methods to compare with our approach, which use IoUs of the detection $\hat{z_t}$ and the prediction prior $H\hat{x_t}^{\mu_{t-1}}$ for data association. For the IMM, we employ 5 primitive models for the KFs, namely, constant velocity, constant acceleration,  coordinated turning, constant turning and velocity, constant turning and acceleration, with linearization performed whenever necessary. We initialize the mode probability uniformly over all models (i.e. $\mu_0^i=0.2$), and set a time-invariant default TPM:
\small
\begin{flalign} 
   & \mu_0^{} = 
  \begin{bmatrix} 
   0.2 \\
   0.2 \\
   0.2 \\
   0.2 \\
   0.2 \\
  \end{bmatrix} 
   , 
   Pr =
  \begin{bmatrix} 
   0.85 & 0.05 & 0.05 & 0.05 & 0.0  \\
   0.1 & 0.85 & 0.0 & 0.0 & 0.05  \\
   0.05 & 0.05 & 0.8 & 0.05 & 0.05  \\
   0.05 & 0.0 & 0.05 & 0.8 & 0.1  \\
   0.0 & 0.05 & 0.05 & 0.1 & 0.8  \\
  \end{bmatrix} 
\end{flalign}
\normalsize

\begin{table}[t!]
\caption{Quantitative Tracking Evaluation on KITTI Training Set for Car Category}
\label{tab:1}
\centering
\tiny
\begin{tabular}{c | c | c | c | c | c | c | c | c }
{\bf Benchmark} & \bf {MOTA} & {\bf MOTP} & {\bf MT} & {\bf ML} & {\bf FP} & {\bf FN} & {\bf IDS} & {\bf FRAG}\\ \hline
KF & 30.11 \% & 49.47 \% & 9.27 \% & 38.79 \% & 3314 & 12513 & 452 & 1034\\
\hline
IMM-KF & 33.78 \% & 56.68 \% & 11.17 \% & 36.70 \% & \textbf{3216} & 12461 & 261 & 816\\
\hline
Ours & \textbf{33.49} \% & \textbf{56.34} \% & \textbf{13.12} \% & \textbf{36.17} \% & 3240 & \textbf{12423} & \textbf{166} & \textbf{634}\\
\hline
\end{tabular}
\end{table}
\normalsize

The detector runs at 20 Hz on the video sequences provided by KITTI and the tracker is able to track at 100 Hz. The quantitative performance evaluation is shown in \textbf{Table~\ref{tab:1}}. The results show our method is able to obtain apparent improvement over the plain version and IMM version of the KFs, especially with significantly reduced ID-switch, demonstrating the robustness of our maneuver-orientated data association throughout the testing sequences. 

\subsection{Road-context-aware Tracking in One-North Singapore}

The second test is performed with tracking frameworks implemented on our Autonomous Vehicle test bed driving in public roads of Singapore One-North region (\textbf{Fig.~\ref{fig:overview}}, bottom right). Like the road-context-agnostic test, we initialize the mode probability uniformly over all models and set a default TPM for each newly initiated track, except that the TPMs become time-varying in our proposed framework. The ego-car uses the map to localize itself as well as the detected cars in the map frame and dynamically adjust the TPMs for each target according to its neighbouring road context \footnote{Our implementation edits the traffic signals offline with \textit{rosbag} replay.}. 

To verify the effectiveness of the road text regulated tracking of maneuvering vehicles, we also demonstrate the tracking result across looped road segments at One-North and conduct a qualitative comparison. In \textbf{Fig.~\ref{fig:switch}}, the tracking history of a selected series of vehicles tracked using our proposed framework is shown, including the activated context vectors (magenta) by all tracked vehicles, and an exemplary illustration of the mode shift along the trajectory of selected vehicles. The trajectory is smoothly interpolated from the raw tracking result while the mode shift is calculated at the tracklet nodes, displayed as pie charts of the mode weights. To compare the tracking stability, the trajectories of tracked vehilces by the three methods and corresponding ID switch are elaborated. As shown in \textbf{Fig.~\ref{fig:compare}}, at road junctions, the plain version of the KF pipeline frequently loses the targets performing abrupt turning or stop/go maneuvers. In contrast, our approach with road context regulation tracks the vehicles continuously with relatively constant target IDs, even when the localization drifts. While the IMM version without context still tracks all vehicles at junctions and switching traffic lights, but the case of ID switch still happens more often than road context aided tracking. This is due to the ambiguous matches brought by maneuver-oriented data association giving rise to the probability of cross-model matching in IMM, while the fixed model data association using plain KF excludes possible maneuvers from the matching. The road context employed in our method balances the trade-off between these two setups.

The test sequence demonstrates our framework's ability of tracking frequently maneuvering vehicles while aware of its motion model shift, complying to dynamic road context.\footnote{More demonstrations available in the video link\\
\url{ https://drive.google.com/drive/folders/1xEj6e5QV3v8IYVA56nL-R3t0j_xtujEU?usp=sharing}}

\begin{figure}[t!]
\centering
  {\includegraphics[width=\linewidth,height=0.65\linewidth]{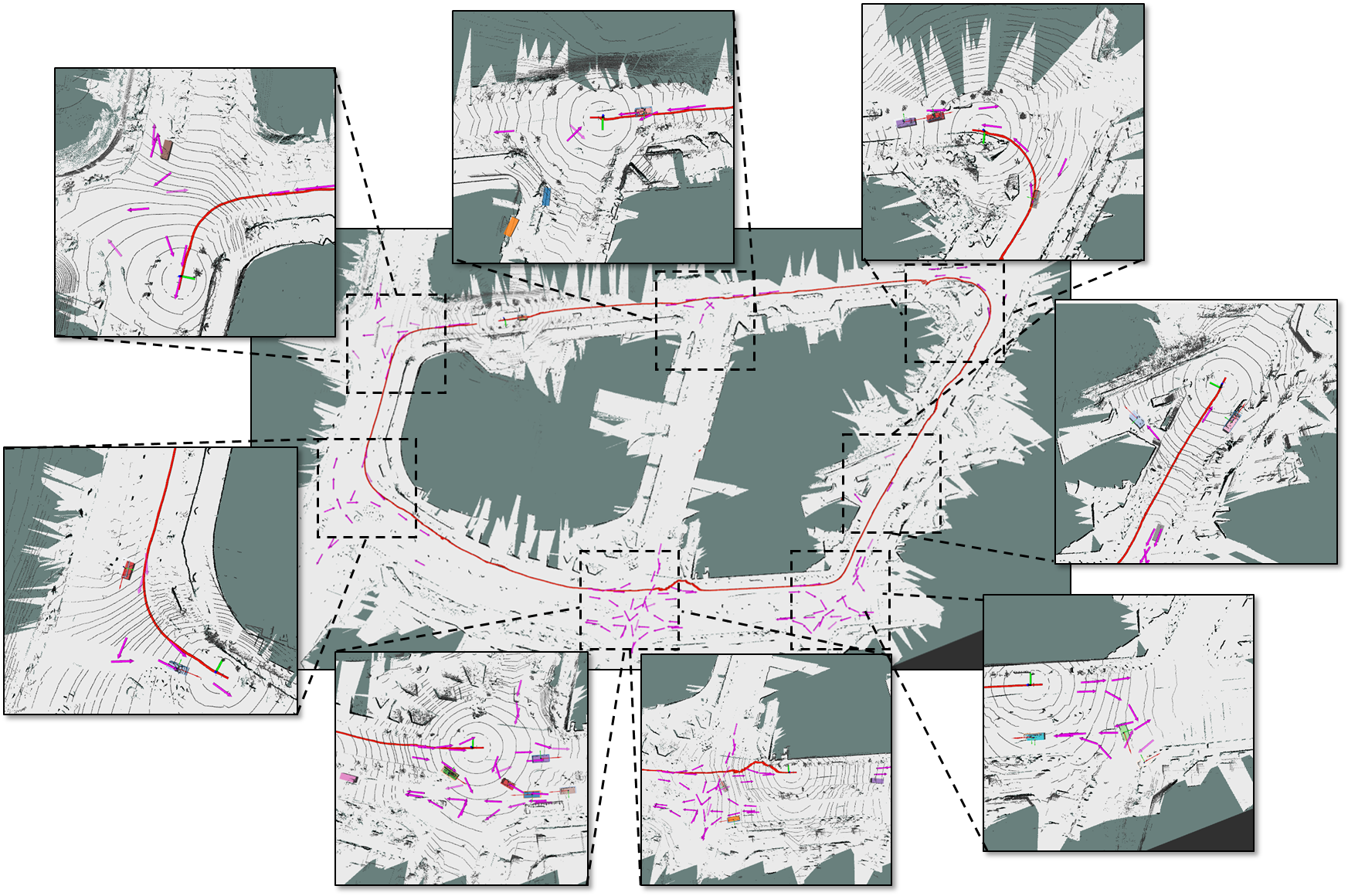}}
  {\includegraphics[width=\linewidth,height=0.5\linewidth]{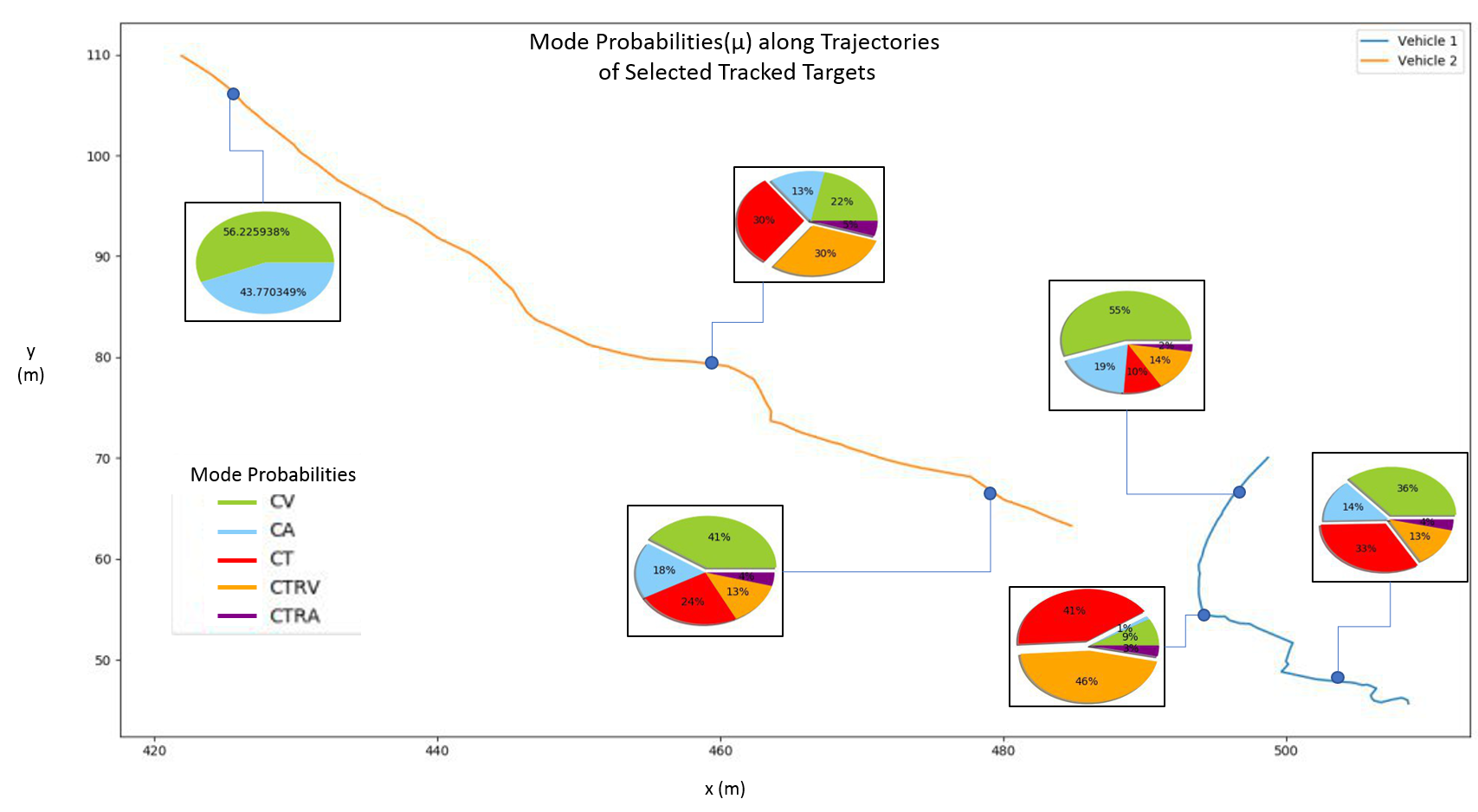}}
\caption{The tracking history of a selected target vehicle.}
\label{fig:switch}
\end{figure}

\section{Conclusion}
In this paper, we propose an online multi-target tracking-by-detection framework for maneuvering vehicle tracking under motion uncertainty. The framework consists of a 3D vehicle detector, a Kalman filter based MOT tracking backbone, and a maneuver-orientated data association using the Interacting Multiple Model (IMM) to account for the maneuvering uncertainty. The point cloud based detector provides real-time 3D bounding boxes of detected vehicle. The data association leverages the model-switching function of IMM to calculate the \textit{a-posterior} residual of each association hypothesis. Unified spatial-temporal association is achieved via integrating the road context into the IMM to adjust the time varying transition probability matrix (TPM). Deterministic online bipartite linear assignment is performed to match detections to existing tracks or create new tracks which are maintained by the Kalman filter with a lifespan management unit. Experimental results demonstrate the effectiveness of our framework tracking multiple vehicles along with dynamic road contexts, including changing road segments and traffic signs/signals. 

\newpage
\bibliographystyle{IEEEtran}
\bibliography{reference}

\addtolength{\textheight}{-12cm}   

\end{document}